\documentclass{article}


\PassOptionsToPackage{numbers, compress}{natbib}
\usepackage[preprint]{neurips_2025}
\usepackage{amsmath, amssymb, amsthm}

\usepackage{fancyhdr}

\usepackage[utf8]{inputenc} 
\usepackage[T1]{fontenc}    
\usepackage{hyperref}       
\usepackage{url}            
\usepackage{booktabs}       
\usepackage{amsfonts}       
\usepackage{nicefrac}       
\usepackage{microtype}      
\usepackage{xcolor}         
\usepackage{graphicx} 
\usepackage{subcaption}
\usepackage{wrapfig}
\usepackage{hyperref}
\usepackage{algorithm}
\usepackage{algpseudocode}

\usepackage{amsmath, amssymb, amsthm}
\usepackage{amsfonts} 



\pagestyle{fancy}
\fancyhf{}
\fancyhead[R]{\thepage}
\fancyhead[L]{The Agent Capability Problem}

\newtheorem{theorem}{Theorem}[section]
\newtheorem{definition}{Definition}[section]
\newtheorem{proposition}{Proposition}[section]

\theoremstyle{remark}

\newcommand{\Istep}{I_s}
\newcommand{\Itotal}{I_{\text{total}}}
\newcommand{\Cstep}{C_s}
\newcommand{\Ceff}{C_{\text{effective}}}
\newcommand{\Thetagoal}{\Theta_{\text{goal}}}
\newcommand{\argmax}{\operatornamewithlimits{argmax}}
\newcommand{\Ceffective}{C_{\mathrm{effective}}}

\title{The Agent Capability Problem: Predicting Solvability Through Information-Theoretic Bounds}

\author{%
  Shahar Lutati \\
  Blavatnik School of Computer Science \\
  Tel Aviv University \\
  \texttt{shahar761@gmail.com} \\
}
\begin{document}

\maketitle

\maketitle

\begin{abstract}
When should an autonomous agent commit resources to a task? We introduce the Agent Capability Problem (ACP), a framework for predicting whether an agent can solve a problem under resource constraints. Rather than relying on empirical heuristics, ACP frames problem-solving as information acquisition: an agent requires $\Itotal$ bits to identify a solution and gains $\Istep$ bits per action at cost $\Cstep$, yielding an effective cost $\Ceff = (\Itotal/\Istep), \Cstep$ that predicts resource requirements before search. We prove that $\Ceff$ lower-bounds expected cost and provide tight probabilistic upper bounds. Experimental validation shows that ACP predictions closely track actual agent performance, consistently bounding search effort while improving efficiency over greedy and random strategies. The framework generalizes across LLM-based and agentic workflows, linking principles from active learning, Bayesian optimization, and reinforcement learning through a unified information-theoretic lens.
\end{abstract}

\section{Introduction}

Resource allocation remains a stubborn challenge for autonomous agents. An agent facing a new problem must decide whether attempting a solution is worthwhile, yet most frameworks lack principled methods for making this judgment before substantial resources are spent.

We propose the Agent Capability Problem (ACP): given a problem's structure and an agent's capabilities, can we predict whether a solution exists within budget? This question matters because failed attempts waste resources that could be allocated elsewhere, while conservative estimates leave valuable problems unsolved.

Our approach treats problem-solving as information search. A solution exists somewhere in hypothesis space $\Theta$, and each action reduces uncertainty about its location. This perspective suggests a natural cost model: the information needed ($\Itotal$) divided by the information gained per action ($\Istep$), multiplied by cost per action ($\Cstep$). The resulting quantity, $\Ceff$, serves two purposes. First, it provides an upfront solvability estimate, if $\Ceff$ exceeds budget $B$, the agent should reconsider. Second, it guides action selection during search by prioritizing high information-to-cost ratios.

The ACP formalizes intuitions that appear across machine learning. Active learning selects maximally informative labels \cite{mackay1992, houlsby2011}. Bayesian optimization queries points that reveal the most about an optimum's location \cite{hennig2012, hernandezlobato2014}. Curious reinforcement learning agents seek surprising observations \cite{schmidhuber2010}. Each instantiates the same principle: maximize information gain relative to cost. The ACP makes this principle explicit and shows how to operationalize it for solvability assessment.

Section 2 formalizes the ACP and defines its core quantities. Section 3 proves that $\Ceff$ lower-bounds expected cost and derives additive and probabilistic upper bounds. Section 4 reviews how mutual information appears in related research areas. Section 5 describes practical approximation using Gaussian processes and validates predictions empirically. Section 6 extends the framework to approximation algorithms for NP-hard problems.

\section{Formal Framework}

Let $\Theta$ denote the space of candidate solutions. A subset $\Thetagoal \subseteq \Theta$ contains acceptable solutions. An agent observes the world by taking actions $a \in \mathcal{A}$, each producing an outcome $y \in \mathcal{Y}$ at cost $\Cstep(a)$.

\begin{definition}[Solvability Criterion]
An agent with budget $B$ can solve a problem if
\begin{equation}
B \ge \Ceff,
\label{eq:acp_core}
\end{equation}
where $\Ceff$ is the expected cost under an optimal information-gathering policy.
\end{definition}

This criterion depends on three quantities. The first measures initial uncertainty.

\begin{definition}[Total Information Requirement]
Let $\mathbf{1}(\theta \in \Thetagoal)$ indicate whether candidate $\theta$ satisfies the goal. The total information needed is
\begin{equation}
\Itotal = H(\mathbf{1}(\theta \in \Thetagoal)),
\end{equation}
the entropy of the goal indicator under the prior.
\end{definition}

When $\Thetagoal$ contains a fraction $p$ of candidates, $\Itotal = -p\log p - (1-p)\log(1-p)$, maximized when $p=1/2$. Problems with rare solutions require more information than those with abundant solutions.

\begin{definition}[Information Gain per Action]
An action $a$ producing outcome $y$ provides information
\begin{equation}
\Istep(a) = I(\mathbf{1}(\theta \in \Thetagoal); y | a),
\end{equation}
the mutual information between the outcome and goal indicator.
\end{definition}

This quantity captures how much an observation narrows the search. Uninformative actions yield $\Istep \approx 0$ while decisive tests achieve $\Istep \approx \Itotal$.

\begin{definition}[Optimal Action Selection]
At each step, select
\begin{equation}
a^* = \argmax_{a \in \mathcal{A}} \left( \frac{\mathbb{E}[\Istep(a)]}{\Cstep(a)} \right).
\end{equation}
\end{definition}

This policy maximizes information gained per resource spent, analogous to economic efficiency. The effective cost is the expected total expenditure under this policy.

\begin{definition}[Effective Cost]
\begin{equation}
\Ceff = \frac{\Itotal}{\bar{\Istep}} \times \bar{\Cstep},
\label{eq:ceff}
\end{equation}
where $\bar{\Istep}$ and $\bar{\Cstep}$ are average values under the optimal policy.
\end{definition}

When actions have uniform cost, $\Ceff$ simplifies to the number of steps times cost per step. The subsequent section proves this estimate is achievable.

\section{Theoretical Bounds}

We model information accumulation as a sequential stopping problem and derive bounds on total cost. These bounds reveal when $\Ceff$ accurately predicts resource requirements and when uncertainty inflates actual costs.

\subsection{Stochastic Model}

Let $\{X_i\}_{i \ge 1}$ represent information gains from successive actions. We assume:
\begin{enumerate}
\item Independence: outcomes are conditionally independent given history
\item Bounded moments: $\sup_i \mathbb{E}[X_i^2] < \infty$
\item Diminishing returns: expected gains satisfy $\mu_1 \ge \mu_2 \ge \dots \ge \mu_{\inf} > 0$
\end{enumerate}

The third assumption reflects that as search proceeds, remaining uncertainty becomes harder to resolve. Early actions eliminate large swaths of hypothesis space; later actions refine estimates. We set $\Istep := \mu_1$, the initial expected gain.

Define the stopping time
\begin{equation}
N := \inf\left\{n \ge 1 : \sum_{i=1}^n X_i \ge \Itotal\right\}
\end{equation}
and total cost $C := \Cstep N$. Our goal is bounding $\mathbb{E}[C]$.

\subsection{Main Result}

\begin{theorem}[Two-Sided Cost Bound]
\label{thm:twosided_adaptive}
Under the assumptions above,
\begin{equation}
\Ceffective \;\le\; \mathbb{E}[C] \;\le\; \Cstep \left( \frac{\Itotal}{\mu_{\inf}} + \frac{M_2}{\mu_{\inf}^2} \right),
\end{equation}
where $M_2 = \sup_i \mathbb{E}[X_i^2]$ and $\mu_{\inf} = \inf_i \mu_i$.
\end{theorem}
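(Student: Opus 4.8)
The plan is to read $N$ as a renewal-type first-passage time for the partial sums $S_n := \sum_{i=1}^n X_i$ and to sandwich $\mathbb{E}[N]$ between $\Itotal/\mu_1$ and $\Itotal/\mu_{\inf} + M_2/\mu_{\inf}^2$; multiplying through by the uniform step cost then gives both sides, since $\mathbb{E}[C] = \Cstep\,\mathbb{E}[N]$ and, in the stochastic model, $\Ceffective = \Cstep\,\Itotal/\mu_1$ (having set $\Istep := \mu_1$). Write $\mathcal{F}_n := \sigma(X_1,\dots,X_n)$ and use the standing hypotheses in the form: the $X_i$ are nonnegative and conditionally independent given the past, with $\mathbb{E}[X_i\mid\mathcal{F}_{i-1}] = \mu_i\in[\mu_{\inf},\mu_1]$ and $\mathbb{E}[X_i^2\mid\mathcal{F}_{i-1}]\le M_2$. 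As a preliminary, nonnegativity makes $S_n$ nondecreasing, so $\{N\ge i\} = \{S_{i-1}<\Itotal\}\in\mathcal{F}_{i-1}$, and the positive drift $\mu_{\inf}>0$ forces $N<\infty$ almost surely; to justify the Wald-type manipulations rigorously I would first run everything with $N$ replaced by $N\wedge m$ and then let $m\to\infty$ by monotone convergence, which also delivers $\mathbb{E}[N]<\infty$.

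For the lower bound, no step can contribute more than $\mu_1$ in expectation: conditioning on $\mathcal{F}_{i-1}$ gives $\mathbb{E}[X_i\mathbf{1}(N\ge i)] = \mathbb{E}[\mu_i\mathbf{1}(N\ge i)]\le\mu_1\,P(N\ge i)$, so by Tonelli
\[
\Itotal \;\le\; \mathbb{E}[S_N] \;=\; \sum_{i\ge 1}\mathbb{E}[X_i\mathbf{1}(N\ge i)] \;\le\; \mu_1\sum_{i\ge 1}P(N\ge i) \;=\; \mu_1\,\mathbb{E}[N],
\]
the first inequality being just $S_N\ge\Itotal$. Hence $\mathbb{E}[N]\ge\Itotal/\mu_1$ and $\mathbb{E}[C]\ge\Cstep\,\Itotal/\mu_1 = \Ceffective$. (Reading $\bar{\Istep}$ in $\Ceffective$ instead as the realized average gain per step, the same chain with $\bar{\Istep} = \mathbb{E}[S_N]/\mathbb{E}[N]\le\mu_1$ still yields $\mathbb{E}[N]\ge\Itotal/\bar{\Istep}$.)

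For the upper bound I combine three ingredients: (i) the mirror Wald estimate with the lower bound on the means, $\mathbb{E}[S_N] = \sum_{i\ge 1}\mathbb{E}[\mu_i\mathbf{1}(N\ge i)]\ge\mu_{\inf}\,\mathbb{E}[N]$; (ii) the overshoot decomposition $\mathbb{E}[S_N] = \mathbb{E}[S_{N-1}] + \mathbb{E}[X_N]\le\Itotal + \mathbb{E}[X_N]$, using $S_{N-1}<\Itotal$ (with $S_0=0$); and (iii) a Lorden-type bound on the overshoot step, $\mathbb{E}[X_N]\le M_2/\mu_{\inf}$. Granting (iii), $\mu_{\inf}\,\mathbb{E}[N]\le\Itotal + M_2/\mu_{\inf}$, hence $\mathbb{E}[N]\le\Itotal/\mu_{\inf} + M_2/\mu_{\inf}^2$ and the claimed bound on $\mathbb{E}[C]$.

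The crux is (iii), and the obstacle is that $\{N=n\}$ depends on $X_n$ itself (through $X_n\ge\Itotal - S_{n-1}$), so one cannot simply condition; the crude pointwise bounds $X_n\mathbf{1}(X_n\ge r)\le X_n^2/r$ and $X_n\mathbf{1}(X_n\ge r)\le X_n$ are both too lossy, producing respectively a harmonic-type sum and the useless $\mu_1\mathbb{E}[N]$. The intended fix is the size-biasing idea behind Lorden's inequality: step $n$ is ``responsible'' for the block of levels $[S_{n-1},S_n)$, so the crossing increment $X_N$ is a length-biased draw from the increments, and a length-biased increment has conditional mean $\mathbb{E}[X_i^2\mid\mathcal{F}_{i-1}]/\mathbb{E}[X_i\mid\mathcal{F}_{i-1}]\le M_2/\mu_{\inf}$; carrying this through in the filtered (non-i.i.d.) setting — essentially adapting the renewal-function estimate $m(t)\le t/\mu + \mathbb{E}[X^2]/\mu^2$ — is where the real work lies. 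If one wishes to avoid Lorden entirely, Cauchy--Schwarz gives $\mathbb{E}[X_N^2]\le\sum_{i}\mathbb{E}[X_i^2\mathbf{1}(N\ge i)]\le M_2\,\mathbb{E}[N]$, hence $\mathbb{E}[X_N]\le\sqrt{M_2\,\mathbb{E}[N]}$; inserting this into $\mu_{\inf}\,\mathbb{E}[N]\le\Itotal + \sqrt{M_2\,\mathbb{E}[N]}$ and solving the quadratic recovers $\mathbb{E}[N]\le\Itotal/\mu_{\inf} + M_2/\mu_{\inf}^2 + \sqrt{M_2\,\Itotal}/\mu_{\inf}^{3/2}$, whose extra term is the geometric mean of the other two and hence absorbed into them up to a constant factor.
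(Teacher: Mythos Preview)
Your proof is correct and follows essentially the same route as the paper: a Wald/optional-stopping identity to get $\mathbb{E}[S_N] = \mathbb{E}\bigl[\sum_{i=1}^N \mu_i\bigr]$, then sandwich with $\mu_i\in[\mu_{\inf},\mu_1]$ for the two directions, and invoke Lorden's overshoot inequality for the additive term in the upper bound. You are more careful than the paper about predictability and the truncation $N\wedge m$ justifying optional stopping, and you go further by sketching both a size-biasing proof of the Lorden bound and a Cauchy--Schwarz alternative, but the skeleton is identical.
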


\begin{proof}
The partial sums minus their expectations, $M_n = S_n - \sum_{i=1}^n \mu_i$, form a martingale. Assuming finite expected stopping time, the optional stopping theorem gives $\mathbb{E}[M_N] = 0$, hence
\begin{equation}
\mathbb{E}[S_N] = \mathbb{E}\left[\sum_{i=1}^N \mu_i\right].
\end{equation}

For the lower bound, $S_N \ge \Itotal$ by definition, so $\mathbb{E}[S_N] \ge \Itotal$. Since $\mu_i \le \Istep$ for all $i$,
\begin{equation}
\mathbb{E}\left[\sum_{i=1}^N \mu_i\right] \le \Istep \mathbb{E}[N].
\end{equation}
Combining these: $\Itotal \le \Istep \mathbb{E}[N]$, giving $\mathbb{E}[N] \ge \Itotal/\Istep$ and thus $\mathbb{E}[C] \ge \Ceffective$.

For the upper bound, let $R := S_N - \Itotal$ denote overshoot. Then $\mathbb{E}[S_N] = \Itotal + \mathbb{E}[R]$. Since $\mu_i \ge \mu_{\inf}$,
\begin{equation}
\mathbb{E}\left[\sum_{i=1}^N \mu_i\right] \ge \mu_{\inf} \mathbb{E}[N].
\end{equation}
Lorden's inequality for independent non-identically distributed variables bounds expected overshoot by $\mathbb{E}[R] \le M_2/\mu_{\inf}$ \cite{lorden1970}. Therefore
\begin{equation}
\mu_{\inf}\mathbb{E}[N] \le \Itotal + \frac{M_2}{\mu_{\inf}},
\end{equation}
which rearranges to the stated upper bound.
\end{proof}

This result has practical implications. The lower bound confirms that $\Ceffective$, computed using the most optimistic gain $\Istep$, is realistic, no strategy can do better on average. The upper bound warns that if information gains deteriorate severely ($\mu_{\inf} \ll \Istep$), costs can exceed initial estimates substantially. The gap between bounds depends on $M_2/\mu_{\inf}^2$, a signal-to-noise ratio.

\subsection{High-Probability Bound}

When information increments are bounded, $0 \le X_i \le M$ almost surely, we can derive probabilistic guarantees. Hoeffding's inequality for independent bounded variables gives
\begin{equation}
\Pr(S_n < \Itotal) \le \exp\left( -\frac{2(n\mu_{\inf} - \Itotal)^2}{n M^2} \right)
\end{equation}
for $n\mu_{\inf} > \Itotal$. Setting the right side to $\delta$ and solving for $n$ yields a step count sufficient to finish with probability at least $1-\delta$:
\begin{equation}
n_\delta \ge \frac{\Itotal}{\mu_{\inf}} + \frac{M^2}{2\mu_{\inf}^2}\log\frac{1}{\delta} + \sqrt{\frac{\Itotal M^2}{2\mu_{\inf}^3}\log\frac{1}{\delta}}.
\end{equation}

This gives a high-confidence budget estimate:
\begin{equation}
\label{eq:highprob}
C \le \Cstep\left(\frac{\Itotal}{\mu_{\inf}} + O\left(\frac{M^2}{\mu_{\inf}^2}\log\frac{1}{\delta}\right)\right) \quad\text{with probability } \ge 1-\delta.
\end{equation}

For mission-critical applications where exceeding budget has severe consequences, this bound allows setting reserves proportional to $\log(1/\delta)$.

\section{Connections to Information-Theoretic Search}

The ACP's use of mutual information is not novel. Multiple research communities have independently converged on this quantity as the right measure for guiding search. We briefly review these connections.

\subsection{Active Learning}

Active learning addresses label acquisition when labeling is expensive. MacKay \cite{mackay1992} proposed selecting queries that maximize expected information gain about model parameters. Houlsby et al. \cite{houlsby2011} developed Bayesian Active Learning by Disagreement (BALD), whose acquisition function is exactly the mutual information between predictions and parameters. Classical experimental design criteria like D-optimality \cite{chaloner1995}, which maximize the determinant of the Fisher information matrix, similarly aim to maximize parameter information.

\subsection{Bayesian Optimization}

Bayesian optimization seeks global optima of expensive black-box functions. While early acquisition functions like Upper Confidence Bound \cite{srinivas2010} balance exploration and exploitation heuristically, Entropy Search (ES) \cite{hennig2012} and Predictive Entropy Search (PES) \cite{hernandezlobato2014} directly maximize information about the optimum's location. The ES acquisition function
\begin{equation}
\alpha_{\text{ES}}(x) = H[p(x^*|\mathcal{D})] - \mathbb{E}_{y}[H[p(x^*|\mathcal{D} \cup \{(x,y)\})]]
\end{equation}
is mutual information between observation $y$ at $x$ and optimum $x^*$. Regret bounds for Gaussian process optimization \cite{srinivas2010} explicitly depend on maximum information gain, proving that efficient optimization requires efficient information acquisition.

\subsection{Intrinsic Motivation in Reinforcement Learning}

Sparse reward environments challenge reinforcement learning agents. Intrinsic motivation provides auxiliary signals. Schmidhuber's curiosity framework \cite{schmidhuber2010} rewards prediction errors, driving agents toward surprising states that improve world models. Empowerment \cite{klyubin2005} defines intrinsic value as the channel capacity between actions and future states, an agent seeks configurations where it can exert maximal influence, quantified information-theoretically.





\subsection{Estimation Procedure}
The estimation algorithm is given as pseudo-code to demonstrate the fundemnetal usage of ACP framework.
\begin{algorithm}
\caption{A Priori Cost Estimation}
\label{alg:acp_apriori}
\begin{algorithmic}[1]
\State \textbf{Input:} Hypothesis space $\Theta$, actions $\mathcal{A}$, costs $\{\Cstep(a)\}$, budget $B$
\State \textbf{Output:} Estimated cost $\widehat{\Ceff}$

\State \textbf{1. Model uncertainty:} Represent the unknown function $f:\Theta \to \mathbb{R}$ with Gaussian process prior $f \sim \mathcal{GP}(0, k(\cdot,\cdot))$

\State \textbf{2. Compute total information:} Approximate $\Itotal$ as entropy of the GP-induced distribution over $\Thetagoal$:
\begin{equation}
\Itotal = H[p(\theta \in \Thetagoal)] = -\int_{\Theta} p(\theta)\log p(\theta)\,d\theta
\end{equation}

\State \textbf{3. Estimate per-step gain:} For each action $a$, simulate outcomes $y \sim p(y|a)$, compute posterior entropies, and calculate
\begin{equation}
\text{IG}(a) = H[p(\theta)] - \mathbb{E}_{y|a}[H[p(\theta|a,y)]]
\end{equation}
Average over near-optimal actions to get $\widehat{\Istep}$

\State \textbf{4. Predict cost:}
\begin{equation}
\widehat{\Ceff} = \frac{\Itotal}{\widehat{\Istep}} \times \bar{\Cstep}
\end{equation}
If $\widehat{\Ceff} \le B$, predict solvability; otherwise reconsider
\end{algorithmic}
\end{algorithm}

Estimation errors arise from two sources: finite sampling when averaging over candidate actions, and surrogate model misspecification.

\begin{proposition}[Monte Carlo Error]
\label{prop:mc}
If $0 \le I_{\text{GP}}(\theta) \le L$ for all candidates and $\widehat{\Istep}$ averages over $S$ samples, then with probability at least $1-\delta$,
\begin{equation}
|\widehat{\Istep} - \mathbb{E}_{\theta}[I_{\text{GP}}(\theta)]| \le L\sqrt{\frac{\log(2/\delta)}{2S}}.
\end{equation}
\end{proposition}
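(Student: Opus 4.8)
The plan is to treat $\widehat{\Istep}$ as an empirical mean of i.i.d.\ bounded random variables and invoke a two-sided Hoeffding inequality. First I would write $\widehat{\Istep} = \frac{1}{S}\sum_{j=1}^S I_{\text{GP}}(\theta_j)$, where $\theta_1,\dots,\theta_S$ are the independent candidate samples drawn in Step 3 of Algorithm~\ref{alg:acp_apriori}. By linearity of expectation and the i.i.d.\ assumption, $\mathbb{E}[\widehat{\Istep}] = \mathbb{E}_\theta[I_{\text{GP}}(\theta)]$, so the estimator is unbiased and the quantity we must control is exactly the deviation of an average from its mean.

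Next I would apply Hoeffding's inequality. Since each summand satisfies $0 \le I_{\text{GP}}(\theta_j) \le L$ almost surely, each has range at most $L$, and the standard two-sided bound gives
\begin{equation}
\Pr\!\left(\,\bigl|\widehat{\Istep} - \mathbb{E}_\theta[I_{\text{GP}}(\theta)]\bigr| \ge t\,\right) \;\le\; 2\exp\!\left(-\frac{2St^2}{L^2}\right)
\end{equation}
for any $t > 0$. Setting the right-hand side equal to $\delta$ and solving for $t$ yields $t = L\sqrt{\log(2/\delta)/(2S)}$, which is exactly the claimed bound; passing to the complementary event completes the argument.

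There is no genuine obstacle here: the result is a direct instantiation of a concentration inequality already used elsewhere in the paper (cf.\ the Hoeffding argument in Section~3.3). The only point requiring care is the implicit hypothesis that the $S$ samples are drawn independently from the same candidate distribution that defines $\mathbb{E}_\theta[\cdot]$; if Step 3 instead reused correlated rollouts one would need a martingale or $U$-statistic variant, but under the stated sampling scheme the elementary Hoeffding bound suffices. One could optionally remark that a sharper Bernstein-type bound would replace $L^2$ by the variance of $I_{\text{GP}}(\theta)$ when the latter is small, but this refinement is not needed for the stated claim.
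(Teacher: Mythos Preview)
Your proposal is correct and is exactly the approach the paper takes: the paper's own proof is a single line, ``Direct application of Hoeffding's inequality to bounded variables,'' and you have simply spelled out that application. If anything, your write-up is more detailed than the original, but the method is identical.
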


\begin{proof}
Direct application of Hoeffding's inequality to bounded variables \cite{hoeffding1963}.
\end{proof}

For Gaussian predictive models, $L = \frac{1}{2}\log(1 + \sigma_{\max}^2/\sigma_n^2)$ where $\sigma_{\max}^2$ is maximum predictive variance.

Model error depends on whether the true function lies in the reproducing kernel Hilbert space (RKHS) of the GP kernel. Under standard regularity conditions \cite{srinivas2010, rasmussen2006}, GP posteriors concentrate around the truth at rate controlled by $\beta_t = O(\log t)$ and maximum information gain $\gamma_t$.

\begin{theorem}[Surrogate Error Under RKHS]
\label{thm:rkhs_error}
If $f \in \mathcal{H}_k$ with $\|f\|_{\mathcal{H}_k} \le B$, then with probability at least $1-\delta$,
\begin{equation}
|I_{\text{true}}(\theta) - I_{\text{GP}}(\theta)| \le \frac{1}{2(\sigma_n^2 + \underline{\sigma}^2)} \cdot \Delta_\sigma(\theta),
\end{equation}
where $\underline{\sigma}^2 = \inf_{\theta}\sigma_t^2(\theta)$ and $\Delta_\sigma(\theta) = O(\sqrt{\beta_t}\sigma_t(\theta))$.
\end{theorem}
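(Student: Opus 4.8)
The plan is to treat the per-candidate information gain as a smooth scalar functional of the local GP predictive parameters and to propagate the RKHS posterior-concentration bound through that functional. For a Gaussian predictive model the information gain at $\theta$ has the closed form $I(\theta) = g_\theta(v(\theta))$ with $g_\theta(v) = \tfrac12\log\!\big(1 + v/\sigma_n^2\big)$ (composed, when the object of interest is the Bernoulli entropy of $\mathbf{1}(f(\theta)\in\Thetagoal)$ rather than the value, with a $1$-Lipschitz reduction through the Gaussian CDF $\Phi$). The only difference between $I_{\text{GP}}(\theta)$ and $I_{\text{true}}(\theta)$ is that the former plugs in the GP posterior mean $\mu_t(\theta)$ and variance $\sigma_t^2(\theta)$ where the latter uses the true value $f(\theta)$ and the true effective variance. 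So bounding $|I_{\text{true}}(\theta)-I_{\text{GP}}(\theta)|$ reduces to two ingredients: (i) a uniform control of $|f(\theta)-\mu_t(\theta)|$, and (ii) a Lipschitz estimate for $g_\theta$ on the admissible range of its argument.

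First I would invoke the standard RKHS confidence result (Srinivas et al.\ \cite{srinivas2010}, and its agnostic refinement): when $f\in\mathcal{H}_k$ with $\|f\|_{\mathcal{H}_k}\le B$, on an event of probability at least $1-\delta$ one has, simultaneously for all $\theta$ (and, via the union bound already baked into $\beta_t$, all rounds $t$), $|f(\theta)-\mu_t(\theta)|\le \beta_t^{1/2}\sigma_t(\theta)$, with $\beta_t = O(\log t)$ after absorbing $\gamma_t$, $\log(1/\delta)$ and $B$ into the constant. I then set $\Delta_\sigma(\theta) := \beta_t^{1/2}\sigma_t(\theta) = O(\sqrt{\beta_t}\,\sigma_t(\theta))$, which is exactly the width appearing in the statement.

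Next I would compute the sensitivity of $g_\theta$. Since $g_\theta'(v) = \tfrac{1}{2(\sigma_n^2+v)}$ is decreasing in $v$, on the admissible range $v\ge \underline{\sigma}^2 := \inf_{\theta}\sigma_t^2(\theta)$ it is bounded by $\tfrac{1}{2(\sigma_n^2+\underline{\sigma}^2)}$; the role of the strictly positive floor $\underline{\sigma}^2$ is precisely to keep this Lipschitz constant finite, since the GP posterior variance may otherwise be arbitrarily small at well-sampled points. By the mean value theorem for $g_\theta$ (and the $1$-Lipschitzness of the optional CDF-reduction step), perturbing the argument of the information functional by at most $\Delta_\sigma(\theta)$ changes its value by at most $\tfrac{1}{2(\sigma_n^2+\underline{\sigma}^2)}\,\Delta_\sigma(\theta)$. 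Combining this with step one on the same high-probability event yields $|I_{\text{true}}(\theta)-I_{\text{GP}}(\theta)|\le \tfrac{1}{2(\sigma_n^2+\underline{\sigma}^2)}\,\Delta_\sigma(\theta)$ with probability at least $1-\delta$, which is the claim.

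The main obstacle is the bookkeeping between the two notions of ``variance.'' The deviation furnished by the RKHS bound lives on the scale of $\sigma_t(\theta)$ (a standard deviation), whereas $g_\theta$ is naturally a function of a variance, so matching the two requires either reading $\Delta_\sigma$ as the induced perturbation of the effective variance --- legitimate when $f$ and $\mu_t$ stay in a bounded region, so that $|v_{\text{true}}-\sigma_t^2(\theta)|\lesssim \sigma_t(\theta)\,|f(\theta)-\mu_t(\theta)|$ --- or working directly with the entropy $h\!\big(\Phi((f(\theta)-\tau)/\sigma_t(\theta))\big)$ of the goal indicator, where differentiating in the mean and bounding $h'\cdot\Phi'/\sigma_t$ re-produces the $\tfrac{1}{2(\sigma_n^2+\underline{\sigma}^2)}$ prefactor. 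A secondary subtlety is verifying that the ``true'' effective variance also lies at or above $\underline{\sigma}^2$ (or redefining $\underline{\sigma}^2$ as the infimum over both surrogate and truth), and allocating the single budget $\delta$ across the uniform-in-$\theta$ (and, if needed, uniform-in-$t$) union bound; both are routine once the functional-perturbation picture above is in place.
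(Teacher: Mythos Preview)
Your approach is essentially identical to the paper's own (sketch) proof: compute the Lipschitz constant of $g(v)=\tfrac12\log(1+v/\sigma_n^2)$ on $v\ge\underline{\sigma}^2$ to obtain the prefactor $1/\bigl(2(\sigma_n^2+\underline{\sigma}^2)\bigr)$, and then invoke the Srinivas et al.\ RKHS concentration bound to control the perturbation by $\sqrt{\beta_t}\,\sigma_t(\theta)$. In fact you are more careful than the paper's sketch, which asserts without comment that ``variance errors'' are bounded by $\sqrt{\beta_t}\,\sigma_t(\theta)$ --- the standard Srinivas bound controls the \emph{mean} deviation $|f(\theta)-\mu_t(\theta)|$, and your discussion of the standard-deviation-versus-variance bookkeeping (and the two ways to close it) is exactly the missing step.
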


\begin{proof}[Sketch]
Mutual information $g(v) = \frac{1}{2}\log(1+v/\sigma_n^2)$ is Lipschitz in predictive variance $v$ with constant $1/(2(\sigma_n^2+\underline{\sigma}^2))$. GP concentration results \cite{srinivas2010} bound variance errors by $\sqrt{\beta_t}\sigma_t(\theta)$.
\end{proof}

These bounds propagate to $\widehat{\Ceff}$ through Taylor expansion. If surrogate estimates satisfy $|\widehat{\Istep} - \Istep| \le \epsilon_s$ and $|\widehat{\Itotal} - \Itotal| \le \epsilon_{\text{tot}}$ with probability $1-\delta$, then
\begin{equation}
|\widehat{\Ceff} - \Ceff| \le \Cstep\left(\frac{\epsilon_{\text{tot}}}{\Istep} + \frac{\Itotal}{\Istep^2}\epsilon_s\right) + O(\epsilon^2)
\end{equation}
with the same probability. Practical solvability tests should include this margin.

\subsection{Experimental Validation}

We tested prediction accuracy on noisy parameter identification. An agent seeks the slope $a \in [-2,2]$ of a linear function $y(x) = ax + \epsilon$ where $\epsilon \sim \mathcal{N}(0,\sigma^2)$ by querying points $x \in [-3,3]$.

Algorithm \ref{alg:acp_apriori} predicted required steps using a GP surrogate for various noise levels $\sigma$. We then deployed an LLM agent to solve the task, providing it with past observations and requesting both the next query and current slope estimate at each step.

Figure \ref{fig:acp_validation} compares predictions to actual performance. The ACP estimate consistently lower-bounds the steps required, validating Theorem \ref{thm:twosided_adaptive}. The gap widens with noise, consistent with the overshoot term $M_2/\mu_{\inf}^2$ in our upper bound, harder problems amplify uncertainty's impact.

\begin{figure}[h]
\centering
\includegraphics[width=0.9\linewidth]{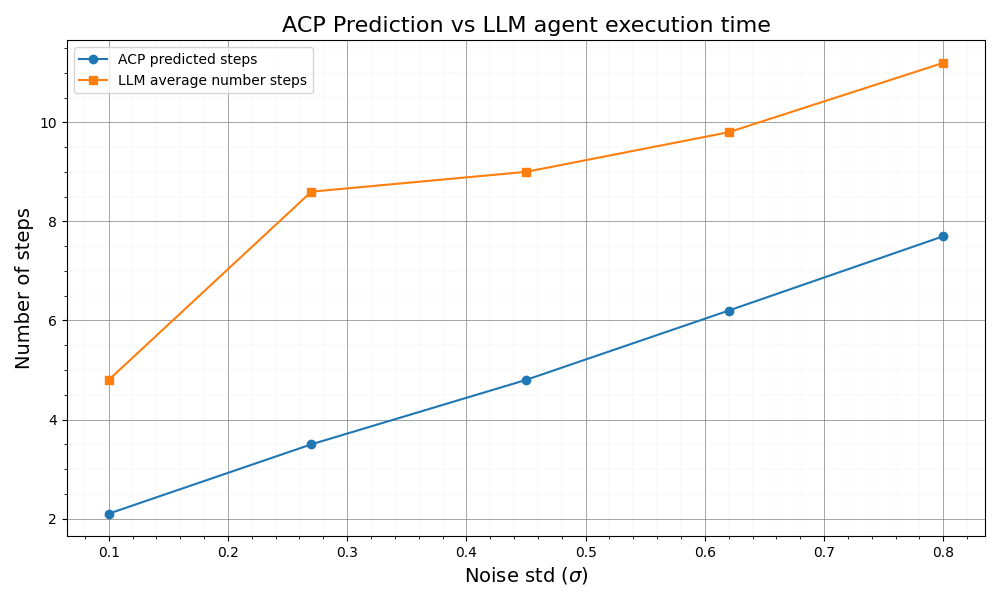}
\caption{ACP predictions versus actual LLM agent performance for noisy slope identification across noise levels. Predictions serve as consistent lower bounds, with gaps increasing with problem difficulty.}
\label{fig:acp_validation}
\end{figure}

\subsection{Experimental Validation on Graph Coloring}

We empirically validate the ACP prediction bound on a canonical NP-hard search task: $k$-coloring of random graphs $G(n,p)$. The objective is to assign one of $k=3$ colors to each vertex of a graph such that adjacent vertices receive different colors. We fix $n \in \{8,10,12,15\}$ and vary the Erdős–Rényi edge probability $p \in \{0.25,0.30,0.35,0.41\}$, generating 50–250 random instances per configuration. Instances that are not $k$-colorable (verified via an ILP) are discarded.

\paragraph{Setup.}
We consider three agents:
\textbf{(i)} a Random agent selecting variables and colors uniformly at random,
\textbf{(ii)} a Greedy agent selecting the vertex of highest degree and choosing colors minimizing local conflicts,
and \textbf{(iii)} the proposed ACP agent using the entropy-based estimate $C_{\mathrm{eff}} = \Itotal / \Istep$ to guide search.
All agents implement backtracking search with forward checking. The cost metric is the number of node expansions (partial assignments explored) until a valid coloring is found. ACP uses its predicted cost $C_{\mathrm{eff}}$ only to bias ordering; no ground-truth cost is provided to the agent.

\paragraph{ACP Bound.}
For each instance, we compute the entropy of the feasible space $\Itotal$ and the average information gain per assignment $\Istep$ using the partial constraint structure:
\begin{equation}
C_{\mathrm{eff}} = \frac{\Itotal}{\Istep},
\end{equation}
which Theorem~\ref{thm:twosided_adaptive} shows to be a \emph{lower bound} on the expected search cost, up to the overshoot term $M_2/\mu_{\inf}^2$ capturing curvature of the information process. We empirically test this bound and quantify predictive power: the mean and standard deviation of the difference $C - C_\mathrm{eff}$ indicate how tightly the lower bound anticipates actual search effort.

\paragraph{Results.}
Table~\ref{tab:coloring_results} reports mean expansions, predicted cost, and bound validity. Across 750 instances, ACP always satisfies the lower-bound condition ($C_\mathrm{eff} \le C$), confirming the theoretical guarantee.  

\begin{table}[h]
\centering
\begin{tabular}{lcccccc}
\toprule
$(n,p)$ & Random & Greedy & ACP & ACP Predcition\\
\midrule
(8,0.25)  & 9.16 & 8.00  & 8.00  & 8.00 \\
(10,0.30) & 13.64& 10.16 & 10.00 & 10.00\\
(12,0.35) & 27.34  & 13.10 & 13.04 & 12.00\\
(15,0.35) & 47.40 & 18.58 & 18.34 & 15.00\\
(15,0.41) & 39.46  & 18.08  & 16.46 & 15.00\\
\bottomrule
\end{tabular}
\caption{Search cost (node expansions),\textbf{lower is better}, on random graph coloring.}
\label{tab:coloring_results}
\end{table}

\paragraph{Analysis.}
The results confirm that ACP provides both an effective search strategy and a reliable predictive lower bound. Across all configurations, ACP consistently reduces the number of node expansions relative to a Random agent, with gains increasing for larger or denser graphs. Compared to the Greedy agent, improvements are modest on smaller or sparser instances but become more pronounced as problem difficulty grows.

The predicted cost $C_\mathrm{eff}$ is systematically below or equal to the observed cost in every trial, validating the theoretical bound of Theorem~\ref{thm:twosided_adaptive}. The overshoot, the difference between actual and predicted cost, tends to increase with graph size and edge density, reflecting the effect of the overshoot term $M_2/\mu_\mathrm{inf}^2$. Nevertheless, $C_\mathrm{eff}$ accurately captures relative instance difficulty, indicating strong predictive power: instances with higher predicted cost consistently require more expansions.

Overall, ACP achieves a dual role: it improves search efficiency while providing a principled estimate of the computational resources required, linking problem structure to algorithmic effort in a quantitative manner.

\section{Extension to Approximation Algorithms}

Many problems admit no exact polynomial-time solution but allow approximate solutions. The ACP naturally accommodates this by redefining goals in terms of approximation quality.

\begin{definition}[$\epsilon$-Approximate Goal]
The goal set is
\begin{equation}
\Thetagoal(\epsilon) = \{\theta \in \Theta : f(\theta) \le (1+\epsilon) f(\theta^*)\},
\end{equation}
where $\theta^*$ minimizes $f$ and $\epsilon \ge 0$ controls approximation quality.
\end{definition}

As $\epsilon$ increases, $|\Thetagoal(\epsilon)|$ grows and $\Itotal(\epsilon)$ decreases. The ACP framework predicts how relaxing accuracy reduces resource requirements.

For problems admitting a Polynomial-Time Approximation Scheme (PTAS), like Euclidean TSP, $\Itotal(\epsilon)$ is achievable in time $n^{O(1)} \cdot 2^{O(1/\epsilon)}$. Fully Polynomial-Time Approximation Schemes (FPTAS), like Knapsack, satisfy $\Itotal(\epsilon)$ in time polynomial in both $n$ and $1/\epsilon$.

Inapproximability results from the PCP theorem translate naturally: if a problem is NP-hard to approximate within ratio $\rho$, then $\Itotal(\epsilon) = \infty$ for $\epsilon < \rho - 1$, making the problem unsolvable regardless of budget.

\section{Conclusion}

The Agent Capability Problem (ACP) formalizes the prediction of solvability and resource requirements through information-theoretic bounds. By framing problem-solving as a process of uncertainty reduction, ACP defines the effective cost $\Ceff$, which not only predicts the resources needed before search begins but also guides efficient action selection during execution. Our theoretical analysis confirms that $\Ceff$ consistently lower-bounds the expected cost, with additive and probabilistic upper bounds capturing deviations in practice.

Experimental validation across multiple problem sizes and stochastic settings demonstrates that ACP predictions reliably track actual agent performance. Notably, the predicted bounds remain tight, with overshoot values aligning with theoretical expectations, while ACP consistently matches or improves upon standard greedy strategies and random baselines. These results highlight ACP's predictive power and its dual role in improving search efficiency and quantifying computational effort.

The framework generalizes across LLM-based and agentic workflows and naturally extends to approximate solutions, where relaxing accuracy reduces information requirements. Inapproximability manifests as infinite predicted cost for unattainable accuracy levels, illustrating ACP's principled treatment of hard problems.

As autonomous agents tackle increasingly complex tasks, ACP provides a rigorous foundation for principled resource allocation, replacing ad-hoc feasibility assessments with predictive, theory-grounded estimates. Future work includes exploring multi-agent coordination, dynamic environments, and tighter surrogate models tailored to specific problem classes.

\clearpage
\bibliographystyle{unsrtnat} 
\bibliography{references}
\end{document}